\documentclass[conference]{IEEEtran}
\IEEEoverridecommandlockouts
\usepackage{cite}
\usepackage{amsmath,amssymb,amsfonts}
\usepackage{algorithmic}
\usepackage[linesnumbered,ruled]{algorithm2e}
\usepackage{graphicx}
\usepackage{textcomp}
\usepackage{xcolor}
\usepackage{subfigure}
\usepackage{bm}
\usepackage{tabu}
\usepackage[capitalize]{cleveref}
\crefname{section}{Sec.}{Secs.}
\Crefname{section}{Section}{Sections}
\Crefname{table}{Table}{Tables}
\crefname{table}{Table}{Tables}

\newtheorem{theorem}{Theorem}

\newtheorem{remark}[theorem]{Remark}
\newtheorem{proof}{Proof}

\def\BibTeX{{\rm B\kern-.05em{\sc i\kern-.025em b}\kern-.08em
    T\kern-.1667em\lower.7ex\hbox{E}\kern-.125emX}}
\begin{document}

\title{Enhancing Adversarial Attacks: The Similar Target
Method\\
}

\author{\IEEEauthorblockN{Shuo Zhang, Ziruo Wang, Zikai Zhou, Huanran Chen$^\dagger$\thanks{$^\dagger$corresponding author}}
\IEEEauthorblockA{
\textit{School of Computer Science,
        Beijing Institute of Technology}\\
\{huanranchen, shuozhangbit, ziruowang, zikaizhou\}@bit.edu.cn}
}


\maketitle

\begin{abstract}
Adversarial examples are notably characterized by their strong transferability, allowing attackers to craft these examples on their models and subsequently deploy them against other models. This poses significant threats to existing deep learning systems and raises substantial security concerns. While several methods have been developed to enhance transferability, ensemble attacks stand out for their effectiveness. However, previous approaches of ensemble attacks typically rely on simple averaging of logits, probabilities, or losses for ensembling, without delving into the underlying reasons for the improved transferability. In this work, we propose a new approach that makes full use of the information of each surrogate model by regularizing the optimization direction to concurrently attack all surrogate models. This is achieved by promoting cosine similarity between their gradients. Extensive experiments conducted on the ImageNet dataset demonstrate the superior efficacy of our method in enhancing adversarial transferability. Notably, our approach outperforms leading state-of-the-art attackers across 18 discriminative classifiers and adversarially trained models, underscoring its potential in this domain.
\end{abstract}

\begin{IEEEkeywords}
Adversarial examples, Black-box attacks, Transfer attacks.
\end{IEEEkeywords}

\section{Introduction}

Deep learning has experienced significant advancements in recent years. However, a notable vulnerability of these models lies in their susceptibility to adversarial attacks. These attacks involve introducing imperceptible perturbations to original images, consequently impairing the models' performance~\cite{carlini2017towards,goodfellow2014explaining, wei2023cfa, wei2022extracting, wei2024weighted, zhang2023using, chen2023robust}. Recent studies have demonstrated that it is possible to manipulate state-of-the-art models like OpenAI's GPT-4 or Google's Bard, inducing them to divulge private information or produce harmful content through specifically designed objective functions~\cite{zou2023universal,wei2023jailbreak,dong2023robust,piet2023jatmo}.

Furthermore, the property of transferability in adversarial examples~\cite{liu2016delving, dong2018boosting} poses an additional risk. This property enables attackers to create adversarial examples using their models and then apply them to compromise deployed models. Such transfer attacks require neither prior knowledge of the target models nor access to their code, thereby amplifying the threat to the practical deployment of deep learning models and broader social security. This issue is particularly pertinent in the context of state-of-the-art large vision-language models like LLaMA and GPT-4~\cite{dong2023robust,wei2023jailbreak}.

\begin{figure}[t]
    \centering
    \includegraphics[width=9cm]{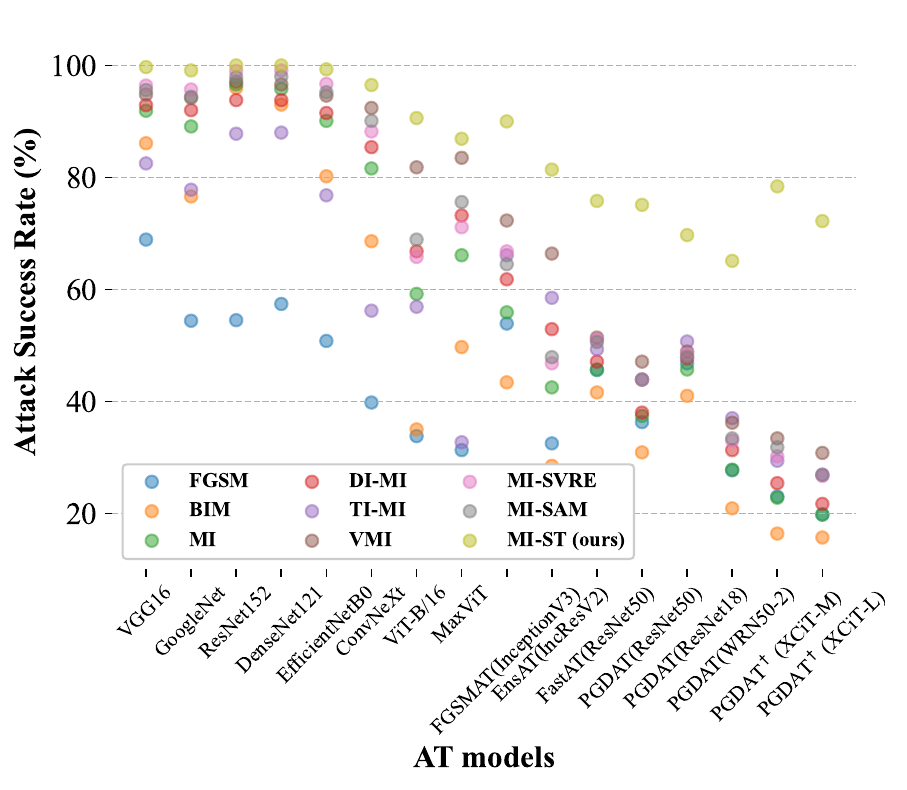}
    \caption{Black-box attack success rate (\%) on adversarially trained models by different algorithms. Our method outperforms the existing methods by a large margin. For more detail, refer to \cref{exp:core_result}.}
    \label{fig:1performance}
\end{figure}

Researchers have made every effort to study the transferability of adversarial examples to improve security and understand their mechanisms. Dong et al.~\cite{dong2018boosting} propose the Momentum Iterative (MI) method which introduces the idea of momentum to the optimization of adversarial examples, and found that it can help the attacker to converge into a more desirable local optimum which has better transferability. Dong et al.~\cite{dong2019evading} propose the Translation-Invariant (TI) method by optimizing a perturbation over an ensemble of translated images, and significantly improving the transferability, which sheds light on crafting adversarial examples across multi-images to create a smoother landscape. Similarly, Xie et al.~\cite{xie2019improving} also propose the Diverse Inputs (DI) method by applying random transformations to the input images at each iteration. From an optimization perspective, Wang et al.~\cite{wang2021enhancing} introduce the variance tuning (VMI) method. This technique enhances transferability by reducing the gradient variance and adjusting the current gradient about neighboring gradients. The VMI method aims to extend the adversarial attack's 'vision', preventing it from being overly myopic and focusing on immediate gradients, thereby facilitating the generation of more effective and transferable adversarial examples.

\begin{figure*}[t]
	\setlength{\abovecaptionskip}{-5pt}
	\setlength{\belowcaptionskip}{-10pt}
	\begin{minipage}{0.32\linewidth}
		\centering
		\includegraphics[width=\linewidth]{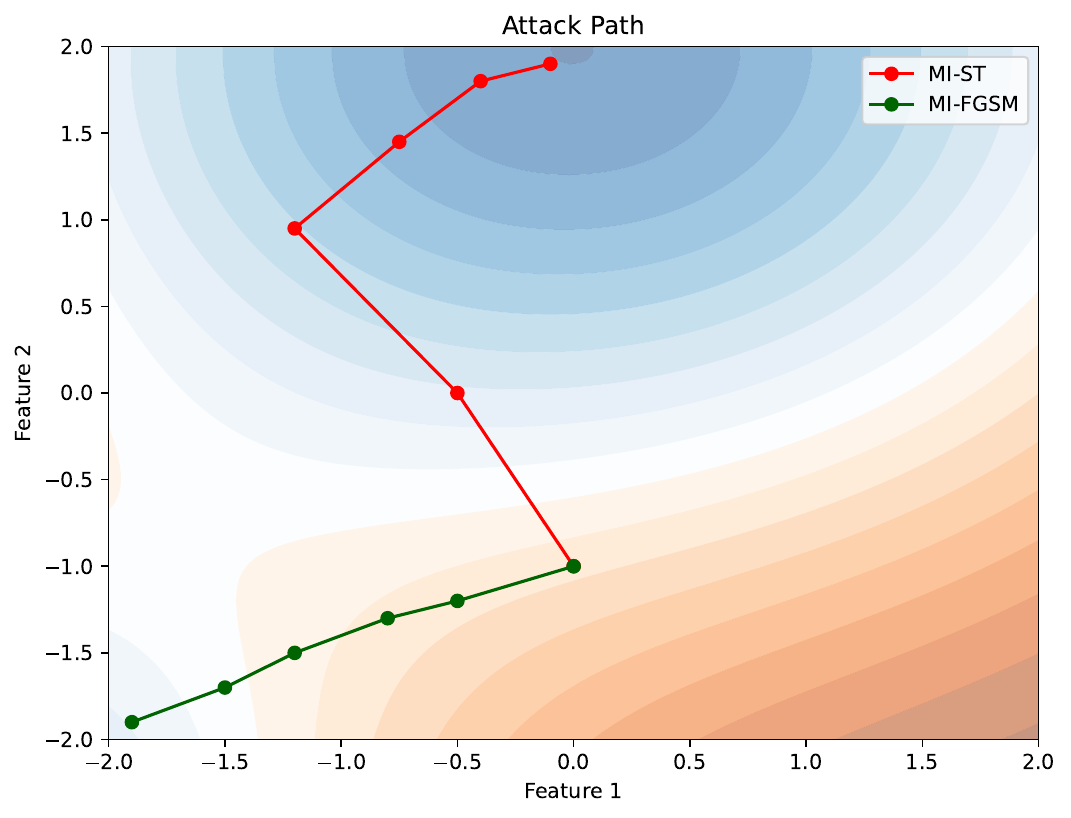}
		\caption{function $f$}
	\end{minipage}
	\hfill
	\begin{minipage}{0.32\linewidth}
		\centering
		\includegraphics[width=\linewidth]{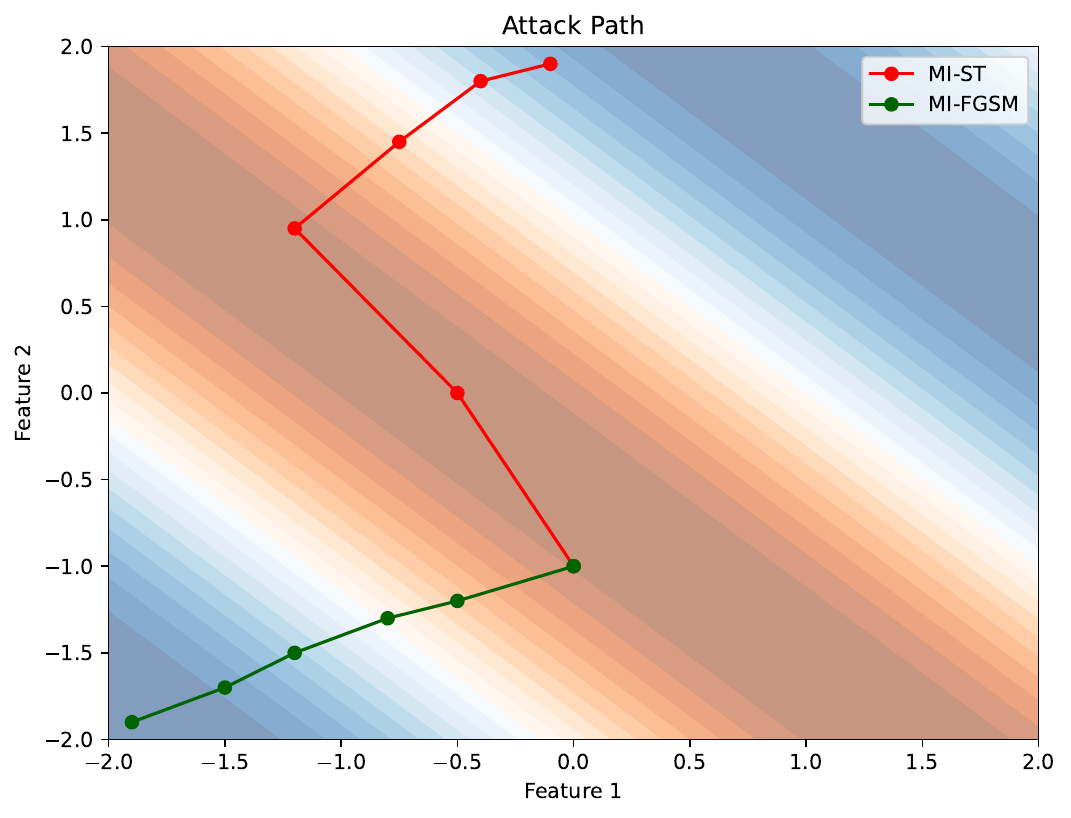}
		\caption{function $g$}
	\end{minipage}
	\hfill
	\begin{minipage}{0.32\linewidth}
		\centering
		\includegraphics[width=\linewidth]{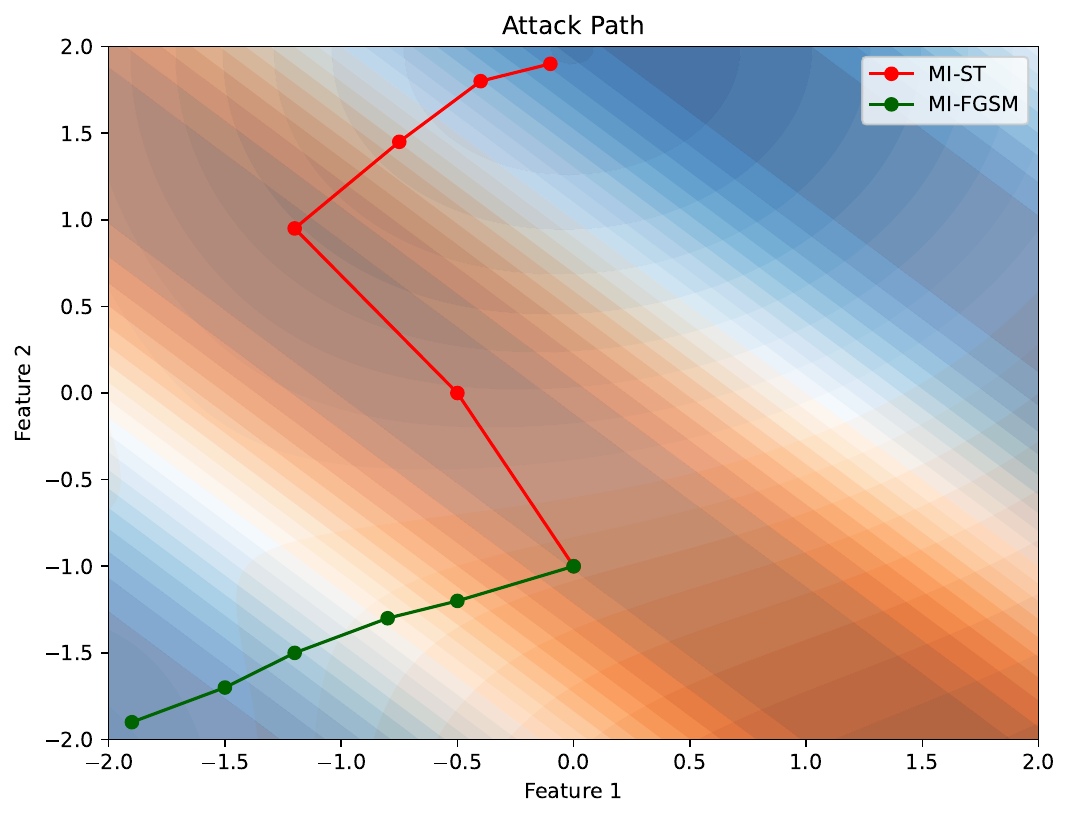}
		\caption{function $f+g$}
	\end{minipage}
 \vspace{0.5cm}
 \caption{Visualization of attack process. Our method initially directs towards regions where the cosine similarity between the gradients of functions \( f \) and \( g \) is high. This approach facilitates convergence to a common optimum for both \( f \) and \( g \), enabling our method to simultaneously attack these functions. In contrast, traditional methods primarily concentrate on optimizing \( f+g \). However, their resulting optimum often does not align with the optimum of \( f \) alone, leading to reduced effectiveness in attacking \( f \).}
 \label{fig:illustration}
\end{figure*}

Orthogonal to these studies, ensemble methods, which craft adversarial examples on multiple surrogate models simultaneously, have been proposed and proven effective, complementing earlier techniques~\cite{dong2018boosting}. However, in-depth research is scarce in this area. Traditionally, researchers have created ensemble models by averaging logits, probabilities, or losses of multiple surrogate models to craft adversarial examples, a process that often results in the loss of valuable information from individual models. In a novel approach,  Xiong et al.~\cite{xiong2022stochastic} introduce the SVRG optimizer into an ensemble attack to reduce the variance of different surrogate models' gradients during optimization, greatly enhancing the transferability. Recently, Chen et al.~\cite{chen2023rethinking} propose to encourage the cosine similarity between gradients of each surrogate model. This approach is theoretically grounded as it targets the upper bound of the distance between local optima, thereby enhancing generalization ability, i.e., transferability.  By promoting the cosine similarity between gradients, this method encourages adversarial examples to converge to a minimum that is common across all surrogate models. Consequently, it simultaneously attacks all the surrogate models rather than just the ensemble model, making it more likely for a new model to be susceptible to these adversarial examples.

Nonetheless, the algorithms presented in Chen et al.~\cite{chen2023rethinking} exhibit limitations. Primarily, their effectiveness is constrained due to the necessity to balance the promotion of cosine similarity between gradients and loss at a fixed ratio of $1: \frac{\beta}{2}$. Besides, the interplay between loss and cosine similarity, corresponding to optimization and regularization respectively, remains inadequately explored. Moreover, the approximation error of their algorithm is $O(\beta^3)$, which compromises their ability to effectively enhance cosine similarity, potentially diminishing their overall efficacy.

In this work, we propose the MI-ST algorithm to circumvent the constraints of weight balancing and delve deeper into the interplay between optimization and regularization. We meticulously develop a novel algorithm, termed the Similar Target method (ST), which effectively encourages the cosine similarity between gradients while disentangling it from the traditional loss and regularization terms. Remarkably, our method can be theoretically demonstrated to possess a substantially lower approximation error compared to Chen et al.'s approach~\cite{chen2023rethinking}. This implies a more precise maximization of the cosine similarity between gradients. Additionally, we integrate our method with established state-of-the-art strategies to further amplify its effectiveness. Furthermore, comprehensive ablation studies have been conducted to assess the trade-offs between optimization and regularization, aiming to identify the optimal balance for various models. Empirical evaluations on the ImageNet dataset substantiate the efficacy of our approach in enhancing adversarial transferability. Overall, our method surpasses current state-of-the-art attackers across 18 leading classifiers and defense mechanisms, demonstrating its superiority in the field.

\section{Related Work}

\subsection{Adversarial attacks}

Formally, denote the original images as $\bm{x}^{real}$  and the adversarial examples as $\bm{x}^{adv}$. Let $\mathcal{F}$ represent the set of all image classifiers and  $\mathcal{F}_t\subset \mathcal{F}$ represent the set of surrogate models. Meanwhile, we use $f(\cdot)$ to denote the classifiers and the $L$ to denote the corresponding loss function~(e.g. cross-entropy loss). Crafting adversarial examples could be formalized as an optimization problem:
\begin{equation}
\label{eq:adv_defination}
    \arg\max_{\bm{x}^{adv}} \frac{1}{|\mathcal{F}|}\sum_{f \in \mathcal{F}}L(f(\bm{x}^{adv}),\bm{y}) \text{,  } s.t. \; ||\bm{x}^{adv} - \bm{x}^{real}||_{\infty}\leq \epsilon.
\end{equation}
This objective function means we need to find the adversarial sample that can maximize the objective function over all target models. In other words, we aim to find examples that all the target models perform poorly on. However, in real scenarios, attackers usually cannot access the deployed model in \( \mathcal{F} \). An alternative solution is to craft adversarial examples on surrogate models \( \mathcal{F}_t \) and transfer them to the target models, a.k.a. transfer attacks. In other words, we maximize the objective function on surrogate models:
\begin{equation}
    \arg\max_{\bm{x}^{adv}} \frac{1}{|\mathcal{F}_t|}\sum_{f \in \mathcal{F}_t}L(f(\bm{x}^{adv}),\bm{y}) \text{,  } s.t. \; ||\bm{x}^{adv} - \bm{x}^{real}||_{\infty}\leq \epsilon. \label{eq2}
\end{equation}

\subsection{Transfer attacks}
Due to its effectiveness and simplicity, transfer attacks have been garnering significant attention. Several methods have been devised to enhance transferability, and we classify them into the following three categories.

\textbf{Gradient-based Methods.} Drawing an analogy to the optimization and regularization in neural network training, Dong et al.~\cite{dong2018boosting} propose the Momentum Iterative (MI) method and Lin et al.~\cite{lin2019nesterov} propose the Nesterov Iterative (NI) method, which introduces the momentum and the Nesterov accelerated gradient to prevent the adversarial examples from falling into the undesired local optima, therefore enhance the generalization ability~\cite{chen2022bootstrap}. Wang et al.~\cite{wang2021enhancing} propose variance tuning (VMI) to reduce the variance of the gradient and tune the current gradient with the gradient variance in the neighborhood of the previous data point. It also extends the adversarial attack's "vision" by utilizing the neighborhood gradient, preventing it from being myopic, and solely focusing on an immediate gradient. We will show that our method can act as a plug-and-play regularization term and be incorporated with these methods in \cref{sec:incorporation}.

\textbf{Input Transformations.} Analogous to the data augmentation, Xie et al.~\cite{xie2019improving} propose the Diverse Inputs (DI) method by applying random transformations to the input images at each iteration. Dong et al.~\cite{dong2019evading} propose the Translation-Invariant (TI) method by optimizing a perturbation over an ensemble of translated images. Lin et al.~\cite{lin2019nesterov} also propose to average the gradient of scaled copies of the input images. These methods add some preprocessing operations before feeding the input data into the neural network to enhance the robustness of adversarial examples toward common perturbations and transformations, thus it is also orthogonal to our work.

\textbf{Ensemble Attacks.} Similar to ensemble learning, using an ensemble of surrogate models can significantly improve the transferability. However, there is a scarcity of research on ensemble attacks. Traditionally, researchers have created ensemble models by averaging the logits, probabilities, or losses of multiple surrogate models to craft adversarial examples, a process that often results in the loss of valuable information from individual models. Recently, Huang et al.~\cite{huang2023t} draw an analogy between the number of classifiers used in crafting adversarial examples and the size of training sets in neural network training, providing a new theoretical understanding that increasing the surrogate models could reduce the generalization error upper bound. On the other hand, while this is so, Xiong et al.~\cite{xiong2022stochastic} incorporate the SVRG optimizer into the ensemble attack to decrease the variance of gradients during optimization. However, this approach still does not fully utilize the potential of individual models and wastes a significant amount of information. 

Recently, Chen et al.~\cite{chen2023rethinking} propose the common weakness of the ensemble models by showing that both the flatness of loss landscape~\cite{chen2022bootstrap, wei2023sharpness} and the distance between the local optimums are strongly correlated with the transferability, and the cosine similarity between gradients are upper bound of the latter term. Intuitively, by promoting the cosine similarity between gradients, this method encourages adversarial examples that converge to a minimum that is common across all surrogate models, thus it simultaneously attacks all the surrogate models, making it more likely for a new model to be susceptible to these adversarial examples. However, the efficacy of algorithms in Chen et al.~\cite{chen2023rethinking} are limited. First, their algorithms are ineffective because their methods have to promote the cosine similarity between gradients and loss with a fixed ratio $1: \frac{\beta}{2}$. Besides, the interplay between loss and cosine similarity, which influence optimization and regularization, remains inadequately explored.

\section{Methodology}

\begin{algorithm}[t]
\textbf{Require:} natural image $\bm{x}^{real}$, label $\bm{y}$, loss function $L$, surrogate models $\mathcal{F}_t = \{f_i\}_i^n$, perturbation budget $\epsilon$, iterations $T$, learning rate $\beta$, loss weight $\lambda_1$, cosine weight $\lambda_2$.

\textbf{Initialize:} $\bm{x}_0 = \bm{x}^{real}$\;
\For{$t$ = $0$ : $T-1$}
{
\For{$i$ = $1$ : $n$}
    {Calculate $\bm{g}_{i-1} = \nabla_{\bm{x}}L(f_{i-1}(\bm{x}_t^{i-1}),\bm{y})$\;Update adversarial sample by $\bm{x}_t^i = clip_{\bm{x}^{real},\epsilon}(\bm{x}_t^{i-1} + \beta \cdot \frac{\bm{g}_{i-1}}{||\bm{g}_{i-1}||_2})$}
    
    {Calculate $\bm{g}_{mean} = \frac{1}{|\mathcal{F}_t|}\sum_i \frac{\nabla_{\bm{x}}L(f_i(\bm{x}_t),\bm{y})}{\|\nabla_{\bm{x}}L(f_i(\bm{x}_t),\bm{y})\|_2}$\;
    $\bm{g}_{cos} = \bm{x}_t^{new} - \bm{x}_t - \beta \bm{g}_{mean}$\;
    Calculate $\bm{g}_{whole} = \lambda_1 \bm{g}_{mean} + \frac{2\lambda_2}{\beta^2} \bm{g}_{cos}$\;
    Update $\bm{x}_{t+1} = clip_{\bm{x}^{real},\epsilon}(\bm{x}_t + \bm{g}_{whole})$}
}
\textbf{Return:} ${\bm{x}}_T$
\caption{ST attacker}
\label{algo}
\end{algorithm}

To this end, by an insightful and complicated mathematical derivation, we propose a new algorithm called MI-ST, which can calculate the exact derivative of the cosine similarity between gradients, enabling us to trade-off between the gradient of the original loss (optimization) and the gradient of cosine similarity (regularization). This section is organized as follows: We illustrate our algorithm in \cref{algo} and \cref{figure:algorithm} in \cref{sec:description}. We also provide our insightful mathematical derivation in \cref{sec:derivation}. Finally, we demonstrate how our algorithm can be combined with state-of-the-art algorithms in \cref{sec:incorporation}.

\subsection{Our algorithm}
\label{sec:description}

\begin{table*}[t]
\centering
\caption{\textbf{Black-box attack success rate($\%$,$\uparrow$) on NIPS2017 dataset}. Our method performs well on 16 normally trained models with various architectures.}
\label{tab:t1}
\begin{tabu}{c|ccccccccccc} 
\hline
Method         & FGSM & BIM  & MI   & DI-MI & TI-MI & VMI  & MI-SVRG & MI-SAM & MI-CSE & MI-CWA & MI-ST           \\ 
\hline
AlexNet~\cite{alexnet}        & 76.4 & 54.9 & 73.2 & 78.9  & 78.0  & 83.3 & 82.5    & 81.0   & 93.6   & 94.6   & 94.2            \\
VGG16~\cite{vgg}          & 68.9 & 86.1 & 91.9 & 92.9  & 82.5  & 94.8 & 96.4    & 95.6   & 99.6   & 99.5   & \textbf{99.7}   \\
GoogleNet~\cite{googlenet}      & 54.4 & 76.6 & 89.1 & 92.0  & 77.8  & 94.2 & 95.7    & 94.4   & 98.8   & 99.0   & \textbf{99.1}   \\
InceptionV3~\cite{inception}    & 54.5 & 64.9 & 84.6 & 89.0  & 75.7  & 91.1 & 92.6    & 89.2   & 97.3   & 97.2   & 97.1            \\
ResNet152~\cite{resnet}      & 54.5 & 96.0 & 96.6 & 93.8  & 87.8  & 97.1 & 99.0    & 97.9   & 99.9   & 99.8   & \textbf{100.0}  \\
DenseNet121~\cite{densenet}    & 57.4 & 93.0 & 95.8 & 93.8  & 88.0  & 96.6 & 99.1    & 98.0   & 99.9   & 99.8   & \textbf{100.0}  \\
SqueezeNet~\cite{iandola2016squeezenet}     & 85.0 & 80.4 & 89.4 & 92.9  & 85.8  & 94.2 & 96.1    & 94.1   & 99.1   & 99.3   & 99.0            \\
ShuffleNetV2~\cite{ma2018shufflenet}   & 81.2 & 65.3 & 79.9 & 85.7  & 78.2  & 89.9 & 90.3    & 87.9   & 97.2   & 97.3   & 96.9            \\
MobileNetV3~\cite{mobilenet}    & 58.9 & 55.6 & 71.8 & 78.6  & 74.5  & 87.3 & 80.6    & 80.7   & 94.6   & 95.7   & 95.2            \\
EfficientNetB0~\cite{tan2019efficientnet} & 50.8 & 80.2 & 90.1 & 91.5  & 76.8  & 94.6 & 96.7    & 95.2   & 98.8   & 98.9   & \textbf{99.3}   \\
MNasNet~\cite{tan2019mnasnet}        & 64.1 & 80.8 & 88.8 & 91.5  & 75.5  & 94.1 & 94.2    & 94.3   & 99.1   & 98.7   & 98.9            \\
RegNetX400MF~\cite{regnet}   & 57.1 & 81.1 & 89.3 & 91.2  & 82.4  & 95.3 & 95.4    & 93.9   & 98.9   & 99.4   & 98.9            \\
ConvNeXt~\cite{liu2022convnext}       & 39.8 & 68.6 & 81.6 & 85.4  & 56.2  & 92.4 & 88.2    & 90.1   & 96.2   & 95.4   & \textbf{96.5}   \\
ViT-B/16~\cite{vit}       & 33.8 & 35.0 & 59.2 & 66.8  & 56.9  & 81.8 & 65.8    & 68.9   & 89.6   & 89.6   & \textbf{90.6}   \\
Swin-S~\cite{liu2021swin}         & 34.0 & 48.2 & 66.0 & 74.2  & 40.9  & 84.2 & 73.4    & 75.1   & 88.6   & 87.6   & 88.5            \\
MaxViT~\cite{tu2022maxvit}         & 31.3 & 49.7 & 66.1 & 73.2  & 32.7  & 83.5 & 71.1    & 75.6   & 85.8   & 85.9   & \textbf{86.9}   \\
\hline
\end{tabu}
\end{table*}


As demonstrated in \cref{algo} and \cref{figure:algorithm}, for given natural images $\bm{x}^{real}$, we first iteratively perform gradient update using the normalized gradient obtained from the i-th model:

\begin{equation}
    \bm{x}_t^i = clip_{\bm{x}^{real},\epsilon}(\bm{x}_t^{i-1} + \beta \cdot \frac{\bm{g}_{i-1}}{||\bm{g}_{i-1}||_2}),\label{eq3}
\end{equation}

where $\bm{g}_i = \nabla_{\bm{x}} L(f_i({\bm{x}}), \bm{y})$, $clip_{\bm{x}^{real},\epsilon}$ operation is to clip the model to the $\epsilon$ neighborhood of the natural image $\bm{x}^{real}$ to control the perturbation budget. After this iterative update, we calculate the gradient of the original loss and our regularization term:

\begin{equation}
    \begin{aligned}
        &\bm{g}_{mean} = \frac{1}{|\mathcal{F}_t|}\sum_i^n \frac{\nabla_{\bm{x}}L(f_i({\bm{x}}_t),{\bm{y}})}{\|\nabla_{{\bm{x}}}L(f_i({\bm{x}}_t),{\bm{y}})\|_2}, \\
        &\bm{g}_{cos} = {\bm{x}}_t^{new} - {\bm{x}}_t - \beta \bm{g}_{mean}.\label{eq4}
    \end{aligned}
\end{equation}

Finally, we calculate the update and trade-off of the optimization and regularization by $\lambda_1$ and $\lambda_2$:
\begin{equation}
    \begin{aligned}
        \bm{g}_{whole} = \lambda_1 \bm{g}_{mean} + \frac{2\lambda_2}{\beta^2} \bm{g}_{cos}, \\
        \bm{x}_{t+1} = clip_{\bm{x}^{real},\epsilon}(\bm{x}_t + \bm{g}_{whole}).\label{eq5}
    \end{aligned}
\end{equation}

Following \cite{dong2019evading, chen2023rethinking, lin2019nesterov, xiong2022stochastic}, we update the adversarial example by the gradient that integrated with momentum to stabilize the update direction and avoid undesirable local optimum, as shown in \cref{figure0b}, where the final direction is the vector sum of $\bm{g}_{whole}$ and momentum. We repeat the above process until convergence or exceeding the iteration time limit.


\begin{figure}[t]
\centering
\subfigure[MI]{
\includegraphics[width=3.75cm]{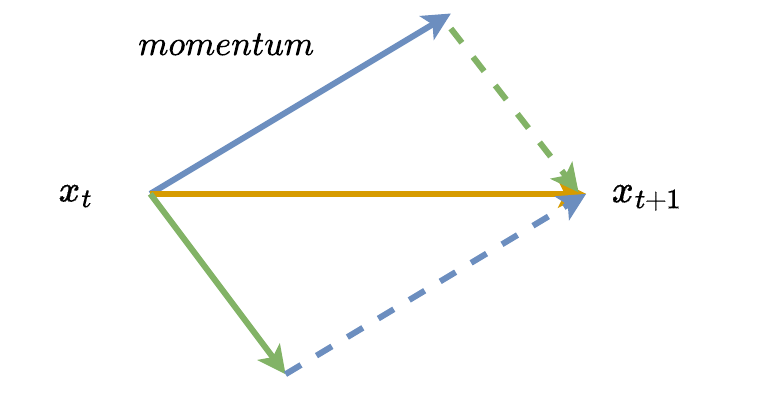}
\label{figure0a}
}
\quad
\subfigure[MI-ST]{
\includegraphics[width=3.78cm]{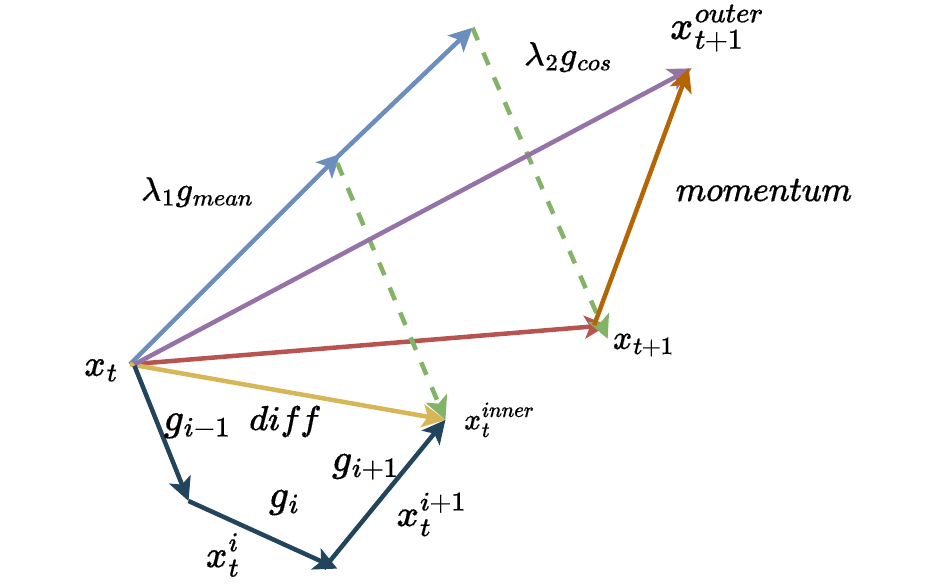}
\label{figure0b}
}
\caption{Illustration of our method.}
\label{figure:algorithm}
\vspace{-1ex}
\end{figure}

\subsection{The mathematical derivation}

In this subsection, we provide our derivation of proposed \cref{algo}.

\label{sec:derivation}

\begin{theorem}
    When $\beta \to 0$, Updating by our \cref{algo} is equivalent to optimizing:
    \begin{equation}
        \max_{\bm{x}} \lambda_1 \frac{1}{|\mathcal{F}_t|}\sum_{f \in \mathcal{F}_t}L(f_i({\bm{x}}), {\bm{y}}) + \lambda_2 \sum_{i,j}^{i<j< |\mathcal{F}_t|}\frac{\bm{g}_i^T\bm{g}_j}{\|\bm{g}_i\|\|\bm{g}_j\|}\label{eq6}
    \end{equation}
    where $\bm{g}_i = \nabla_{\bm{x}} L(f_i({\bm{x}}), {\bm{y}})$, $\lambda_1$ and $\lambda_2$ is the trade-off hyper-parameter setted in our algorithm.
\end{theorem}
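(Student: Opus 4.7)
The plan is to Taylor-expand every quantity in \cref{algo} in $\beta$ at a fixed outer iterate $\bm{x}_t$ and match the order-$\beta^0$ and order-$\beta^2$ contributions of $\bm{g}_{whole}$ to the Euclidean gradient of the objective in \cref{eq6}. Write $\bm{g}_i = \nabla_{\bm{x}} L(f_i(\bm{x}_t),\bm{y})$, $\hat{\bm{g}}_i = \bm{g}_i/\|\bm{g}_i\|_2$, and $H_i = \nabla_{\bm{x}}^2 L(f_i(\bm{x}_t),\bm{y})$. The inner loop composes normalized-gradient steps, so I would iterate the expansion $\hat{\bm{g}}_{i-1}(\bm{x}_t^{i-1}) = \hat{\bm{g}}_{i-1}(\bm{x}_t) + J_{i-1}(\bm{x}_t^{i-1}-\bm{x}_t) + O(\beta^2)$, substituting the partial sum of earlier normalized gradients for $\bm{x}_t^{i-1}-\bm{x}_t$. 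Here $J_k := \nabla_{\bm{x}} \hat{\bm{g}}_k$, which the quotient rule delivers as $J_k = (I-\hat{\bm{g}}_k\hat{\bm{g}}_k^T)H_k/\|\bm{g}_k\|_2$.

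Doing the bookkeeping yields, up to an overall scaling that can be absorbed into $\lambda_1$,
\begin{equation*}
\bm{x}_t^{new} - \bm{x}_t \;=\; \beta\, \bm{g}_{mean} \;+\; \beta^2 \sum_{l<k} J_k \hat{\bm{g}}_l \;+\; O(\beta^3).
\end{equation*}
The first-order term is $\beta\,\bm{g}_{mean}$ and isolates the optimization direction for the ensemble loss. Subtracting it as prescribed by \cref{algo} leaves $\bm{g}_{cos} = \beta^2 \sum_{l<k} J_k \hat{\bm{g}}_l + O(\beta^3)$, so $\tfrac{2}{\beta^2}\bm{g}_{cos} \to 2\sum_{l<k} J_k \hat{\bm{g}}_l$ as $\beta \to 0$. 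The rest of the theorem then reduces to recognizing this limit as $\nabla_{\bm{x}} \sum_{i<j}\hat{\bm{g}}_i^T \hat{\bm{g}}_j$.

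For that recognition I would differentiate a single cosine directly, $\nabla_{\bm{x}}\langle\hat{\bm{g}}_i,\hat{\bm{g}}_j\rangle = J_i^T\hat{\bm{g}}_j + J_j^T\hat{\bm{g}}_i$, and then prove the symmetrization identity
\begin{equation*}
J_k\hat{\bm{g}}_l + J_l\hat{\bm{g}}_k \;=\; J_k^T\hat{\bm{g}}_l + J_l^T\hat{\bm{g}}_k
\end{equation*}
for each unordered pair $\{k,l\}$. Expanding both sides with $J_k=(I-\hat{\bm{g}}_k\hat{\bm{g}}_k^T)H_k/\|\bm{g}_k\|_2$, and exploiting $H_k=H_k^T$ together with the scalar symmetry $\hat{\bm{g}}_l^T H_k \hat{\bm{g}}_k = \hat{\bm{g}}_k^T H_k \hat{\bm{g}}_l$, the ``parallel-to-$\hat{\bm{g}}_k$'' pieces cancel once the pair is summed. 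Summing over $l<k$ then gives $2\sum_{l<k} J_k\hat{\bm{g}}_l = \nabla_{\bm{x}}\sum_{i<j}\hat{\bm{g}}_i^T\hat{\bm{g}}_j$, which is precisely the gradient of the cosine-similarity regularizer.

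Assembling the pieces, $\bm{g}_{whole}=\lambda_1\bm{g}_{mean}+\tfrac{2\lambda_2}{\beta^2}\bm{g}_{cos}$ converges as $\beta\to 0$ to $\lambda_1$ times the ensemble-loss ascent direction plus $\lambda_2 \nabla_{\bm{x}}\sum_{i<j}\hat{\bm{g}}_i^T\hat{\bm{g}}_j$, so \cref{algo} is an Euler step for the objective in \cref{eq6}. The routine parts are the Taylor expansion and the re-indexing of the double sums; the hard part will be the symmetrization identity in the third paragraph, because $J_k\neq J_k^T$ in general and the inner loop only produces the triangular sum $\sum_{l<k}J_k\hat{\bm{g}}_l$, whereas $\nabla_{\bm{x}}\sum_{i<j}\hat{\bm{g}}_i^T\hat{\bm{g}}_j$ is manifestly symmetric in $i,j$. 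Continuity of $\nabla^2 L(f_i(\cdot),\bm{y})$ in a neighborhood of $\bm{x}_t$ is enough to control the $O(\beta^3)$ remainders uniformly.
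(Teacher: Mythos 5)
Your expansion bookkeeping is correct, and in fact more careful than the paper's: keeping the exact Jacobian $J_k=(I-\hat{\bm{g}}_k\hat{\bm{g}}_k^T)\bm{H}_k/\|\bm{g}_k\|_2$ of the normalized gradient, you correctly obtain $\tfrac{2}{\beta^2}\bm{g}_{cos}\to 2\sum_{l<k}J_k\hat{\bm{g}}_l$ as $\beta\to 0$. The gap is precisely the step you flag as the hard part: the symmetrization identity $J_k\hat{\bm{g}}_l+J_l\hat{\bm{g}}_k=J_k^T\hat{\bm{g}}_l+J_l^T\hat{\bm{g}}_k$ is false in general. Writing $J_k^T\hat{\bm{g}}_l-J_k\hat{\bm{g}}_l=\bigl[\hat{\bm{g}}_k(\hat{\bm{g}}_k^T\bm{H}_k\hat{\bm{g}}_l)-(\bm{H}_k\hat{\bm{g}}_k)(\hat{\bm{g}}_k^T\hat{\bm{g}}_l)\bigr]/\|\bm{g}_k\|_2$, the $(k,l)$ discrepancy lies in $\mathrm{span}\{\hat{\bm{g}}_k,\bm{H}_k\hat{\bm{g}}_k\}$ while the $(l,k)$ discrepancy lies in $\mathrm{span}\{\hat{\bm{g}}_l,\bm{H}_l\hat{\bm{g}}_l\}$, and they involve different Hessians, so there is no pairwise cancellation. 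Concrete counterexample in $\mathbb{R}^2$: $\hat{\bm{g}}_k=\bm{e}_1$, $\hat{\bm{g}}_l=\bm{e}_2$, $\|\bm{g}_k\|_2=\|\bm{g}_l\|_2=1$, $\bm{H}_l=0$, $\bm{H}_k$ the symmetric matrix with zero diagonal and unit off-diagonal entries; then $J_k\hat{\bm{g}}_l+J_l\hat{\bm{g}}_k=0$ while $J_k^T\hat{\bm{g}}_l+J_l^T\hat{\bm{g}}_k=\bm{e}_1$. Moreover, even if that identity held, your final equality $2\sum_{l<k}J_k\hat{\bm{g}}_l=\nabla_{\bm{x}}\sum_{i<j}\hat{\bm{g}}_i^T\hat{\bm{g}}_j$ would additionally require $\sum_{l<k}J_k\hat{\bm{g}}_l=\sum_{l<k}J_l\hat{\bm{g}}_k$, i.e.\ that the order-dependent triangular sum produced by the sequential inner loop equals its mirror image; the same counterexample shows this also fails pointwise.

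This is not an accident: the paper never proves the deterministic statement you are after. Its derivation closes exactly these two holes by (i) working under an expectation $\mathbb{E}[\cdot]$, in effect over the exchangeable roles/ordering of the surrogate models, which lets it replace the one-sided sum $\sum_{j<i}\bm{H}_i\hat{\bm{g}}_j/\|\bm{g}_i\|_2$ by $\tfrac12\sum_{i<j}\partial_{\bm{x}}(\hat{\bm{g}}_i^T\hat{\bm{g}}_j)$ via the claimed symmetry $\mathbb{E}[\bm{H}_i\hat{\bm{g}}_j/\|\bm{g}_i\|_2]=\mathbb{E}[\bm{H}_j\hat{\bm{g}}_i/\|\bm{g}_j\|_2]$, and (ii) explicitly dropping the rank-one projector, i.e.\ approximating $J_i^T\approx\bm{H}_i/\|\bm{g}_i\|_2$, which is what makes the relevant operator symmetric. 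So the theorem holds only in this in-expectation, approximate sense, and a pointwise proof along your lines cannot go through without importing those two ingredients (or replacing them, e.g.\ by randomizing the inner-loop model order and stating the result in expectation). A further shared slip worth fixing: $\bm{g}_{mean}$ carries a $1/|\mathcal{F}_t|$ factor while the first-order term of $\bm{x}_t^{new}-\bm{x}_t$ is the unnormalized sum $\beta\sum_i\hat{\bm{g}}_i$, so the subtraction defining $\bm{g}_{cos}$ does not exactly cancel the $O(\beta)$ term; this is not repaired by absorbing a constant into $\lambda_1$, since the leftover is amplified by $2\lambda_2/\beta^2$.
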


\begin{proof}
Denote $\bm{g}_i'$ as the gradient at $i^{th}$ iteration in the inner loop, we can represent $\bm{g}_i'$ by $\bm{g}_i$ using Taylor expansion:
\begin{equation*}
    \begin{aligned}
    \bm{g}_i'&=\bm{g}_i+\bm{H}_i(\bm{x}_t^{i}-\bm{x}_t^0) \\
&=\bm{g}_i+\beta \bm{H}_i \sum_{j=1}^{i-1}\frac{ \bm{g}_j'}{\| \bm{g}_j'\|_2}  \\
&= \bm{g}_i+\beta \bm{H}_i \sum_{j=1}^{i-1}\frac{ \bm{g}_j+o(\beta)}{\| \bm{g}_j+o(\beta)\|_2} \\
&= \bm{g}_i+\beta \bm{H}_i \sum_{j=1}^{i-1}\frac{ \bm{g}_j}{\| \bm{g}_j\|_2} +O(\beta^2).\label{eq7}
    \end{aligned}
\end{equation*}

\begin{table*}[t]
    \begin{center}
      \caption{\textbf{Black-box attack success rate($\%$,$\uparrow$)}. Our method leads the performance on 8 adversarially trained models available on RobustBench. Note that PGDAT$^{\dag}$ is a variant of PGDAT tuned by bag of tricks. It turns out that our method improve the transferability of the adversarial examples.}
    \label{tab:t2}
    \scalebox{1}{
    \begin{tabular}{c|c|c|c|c|c|c|c|c}
    \hline
        Method & FGSMAT~\cite{adversarialMLAtScale} &  EnsAT~\cite{tramer2017ensemble} &  FastAT~\cite{wong2020fast} &  PGDAT~\cite{Engstrom2019Robustness} &  PGDAT~\cite{Engstrom2019Robustness} &  PGDAT~\cite{salman2020adversarially} &  PGDAT$^{\dag}$~\cite{debenedetti2022light} & PGDAT$^{\dag}$~\cite{debenedetti2022light} \\ \hline
        Backbone & InceptionV3 & IncResV2 &  ResNet50 &  ResNet50 &   ResNet18 & WRN50-2 &  XCiT-M &  XCiT-L \\ \hline
        FGSM & 53.9 & 32.5 & 45.6 & 36.3 & 46.8 & 27.7 & 23.0 & 19.8 \\ 
        BIM & 43.4 & 28.5 & 41.6 & 30.9 & 41.0 & 20.9 & 16.4 & 15.7  \\ 
        MI & 55.9 & 42.5 & 45.7 & 37.4 & 45.7 & 27.8 & 22.8 & 19.8  \\ 
        DI-MI & 61.8 & 52.9 & 47.1 & 38.0 & 47.7 & 31.3 & 25.4 & 21.7 \\ 
        TI-MI & 66.1 & 58.5 & 49.3 & 43.9 & 50.7 & 37.0 & 29.4 & 26.9 \\
        VMI & 72.3 & 66.4 & 51.4 & 47.1 & 48.9 & 36.2 & 33.4 & 30.8 \\
        MI-SVRG & 66.8 & 46.8 & 51.0 & 43.9 & 48.5 & 33.0 & 30.2 & 26.7 \\
        MI-SAM & 64.5 & 47.9 & 50.6 & 43.9 & 48.0 & 33.4 & 31.8 & 26.9 \\
        MI-CSE & 89.6 & 78.2 & 75.0 & 73.5 & 68.4 & 64.4 & 77.5 & 71.0\\
        MI-CWA & 89.6 & 79.1 & 74.6 & 73.6 & 69.5 & 64.8 & 77.8 & 71.7 \\
        MI-ST & \textbf{90.0} & \textbf{81.4} & \textbf{75.8} & \textbf{75.1} & \textbf{69.7} & \textbf{65.1} & \textbf{78.4} & \textbf{72.2} \\
        \hline
    \end{tabular}
    }
        \end{center}
\end{table*}

Therefore, the update over the entire inner loop is :
\begin{equation*}
    \begin{aligned}
        \bm{x}_t^n-\bm{x}_t^0
        &=\beta \sum_{i=1}^{n}\frac{\bm{g}_i'}{\|\bm{g}_i'\|_2}\\
        &=\beta \sum_{i=1}^n\frac{\bm{g}_i+\beta \bm{H}_i \sum_{j=1}^{i-1}\frac{ \bm{g}_j}{\| \bm{g}_j\|_2} +O(\beta^2)}{||\bm{g}_i + O(\beta)||_2}\\
        &\approx \beta \sum_{i=1}^n\frac{\bm{g}_i+\beta \bm{H}_i \sum_{j=1}^{i-1}\frac{ \bm{g}_j}{\| \bm{g}_j\|_2} +O(\beta^2)}{||\bm{g}_i||_2}\label{eq8}
    \end{aligned}
\end{equation*}

Since:
\begin{equation*}
    \begin{aligned}
        &\mathbb{E}[\frac{\partial}{\partial \bm{x}}\frac{\bm{g}_i\bm{g}_j}{\|\bm{g}_i\|_2\|\bm{g}_j\|_2}] \\
        =&\mathbb{E}[ \frac{\bm{H}_i}{\|\bm{g}_i\|_2}\left(\bm{I}-\frac{\bm{g}_i\bm{g}_i^\top}{\|\bm{g}_i\|_2}\right)\frac{\bm{g}_j}{\|\bm{g}_j\|_2}
        +\frac{\bm{H}_j}{\|\bm{g}_j\|_2}\left(\bm{I}-\frac{\bm{g}_j\bm{g}_j^\top}{\|\bm{g}_j\|_2}\right)\frac{\bm{g}_i}{\|\bm{g}_i\|_2}] \\
        \approx&\mathbb{E}[  \frac{\bm{H}_i}{\|\bm{g}_i\|_2}\frac{\bm{g}_j}{\|\bm{g}_j\|_2}
        +\frac{\bm{H}_j}{\|\bm{g}_j\|_2}\frac{\bm{g}_i}{\|\bm{g}_i\|_2}] \\
        =&2 \mathbb{E}\left[\frac{\bm{H}_i}{\|\bm{g}_i\|_2}\frac{\bm{g}_j}{\|\bm{g}_j\|_2}\right].\label{eq9}
    \end{aligned}
\end{equation*}

We can get :
\begin{equation}
  \begin{aligned}
    &\mathbb{E}[{\bm{x}}_t^n - {\bm{x}}_t^0] \\
    =& \mathbb{E}[\beta \sum_{i=1}^n\frac{\bm{g}_i+\beta \bm{H}_i \sum_{j=1}^{i-1}\frac{ \bm{g}_j}{\| \bm{g}_j\|_2} +O(\beta^2)}{||\bm{g}_i||_2}]\\
    =& \beta \mathbb{E}[\sum_{i=1}^n\frac{\bm{g}_i}{||\bm{g}_i||_2}] + \beta^2 \mathbb{E}[\sum_{j=1}^{i-1}\frac{\bm{H}_i}{\| \bm{g}_i\|_2} \frac{\bm{g}_j}{\|\bm{g}_j\|_2}] + O(\beta^3)\\
    \approx& \beta \mathbb{E}[ \sum_{i=1}^{n} \frac{\bm{g}_i}{\|\bm{g}_i\|_2}]+\frac{\beta^2}{2} \mathbb{E}[\sum_{i,j}^{i<j}\frac{\partial \frac{\bm{g}_i\bm{g}_j}{\|\bm{g}_i\|_2\|\bm{g}_j\|_2}}{\partial \bm{x}}] + O(\beta^3).\label{eq10}
        \end{aligned}
\end{equation}

Hence, our final update, denoted as $g_{whole}$, is:
\begin{equation}
    \begin{aligned}
        &\quad~\bm{g}_{whole}\\ 
        &= \lambda_1 \bm{g}_{mean} + \frac{2\lambda_2}{\beta^2} \bm{g}_{cos} \\
        &=\lambda_1 \mathbb{E}[ \sum_{i=1}^{n} \frac{\bm{g}_i}{\|\bm{g}_i\|_2}] + \frac{2\lambda_2}{\beta^2}[{\bm{x}}_t^{new} - {\bm{x}}_t - \beta \bm{g}_{mean}] \\
        &= \lambda_1\mathbb{E}[ \sum_{i=1}^{n} \frac{\bm{g}_i}{\|\bm{g}_i\|_2}] + \frac{2\lambda_2}{\beta^2}[\frac{\beta^2}{2} \mathbb{E}[\sum_{i,j}^{i<j}\frac{\partial \frac{\bm{g}_i\bm{g}_j}{\|\bm{g}_i\|_2\|\bm{g}_j\|_2}}{\partial \bm{x}}] + O(\beta^3) ] \\
        &={\lambda_1} \mathbb{E}[ \sum_{i=1}^{n} \frac{\bm{g}_i}{\|\bm{g}_i\|_2}] + \lambda_2  \mathbb{E}[\sum_{i,j}^{i<j}\frac{\partial \frac{\bm{g}_i\bm{g}_j}{\|\bm{g}_i\|_2\|\bm{g}_j\|_2}}{\partial \bm{x}}]+ 2\lambda_2O(\beta)\label{eq11}
    \end{aligned}
\end{equation}

Hence, update using our \cref{algo} is equivalent to minimizing:
\begin{equation}
\label{eq:result}
    \max_{\bm{x}} \lambda_1 \frac{1}{|\mathcal{F}_t|}\sum_{f \in \mathcal{F}_t}L(f_i({\bm{x}}), {\bm{y}}) + \lambda_2 \sum_{i,j}^{i<j< |\mathcal{F}_t|}\frac{\bm{g}_i^T\bm{g}_j}{\|\bm{g}_i\|\|\bm{g}_j\|}
\end{equation}

We get the result.
\end{proof}

\begin{remark}
    As shown in \cref{eq:result}, compared with Chen et al.~\cite{chen2023rethinking}, our method has two advantages. First, we could trade off easily between the optimization (original loss) and regularization (the cosine similarity between gradient) by tuning $\lambda_1$ and $\lambda_2$. We do lots of explorations of this trade off in \cref{sec:ablation}. Besides, the error term in our algorithm is $O(\beta)$ rather than $O(\beta^3)$. Since $\beta$ is usually set to a value larger than one, our method incurs much less approximation error.
\end{remark}


\subsection{Incorporation with previous attackers}
\label{sec:incorporation}
As shown in \cref{algo}, our method could be view as first calculating the update $\bm{g}_{whole}=\lambda_1 \bm{g}_{loss} + \lambda_2 \bm{g}_{cos}$ and then perform gradient ascent. Since our method is orthogonal to input transformation methods like DI~\cite{xie2019improving}, TI~\cite{dong2019evading}, it can be incorporated with them seamlaessly to achieve improved performance. For gradient-based method, like MI~\cite{dong2018boosting}, NI~\cite{lin2019nesterov} and VMI~\cite{wang2021enhancing}, or state-of-the-art optimizers like Adam~\cite{kingma2014adam}, we could directly view $\bm{g}_{whole}$ as current gradients and calculating the corresponding momentum, as demonstrated in \cref{algo:combine}. Hence, our method is easy  to be combined with other previous works to further improve the transferability.

\begin{algorithm}[bp]
\textbf{Require:} optimization target $\bm{\theta}$, optimizer $\alpha$.

\For{$t$ = $0$ : $T-1$}
{
Calculate $\bm{g}_{whole}$ via \cref{eq5};

Update $\bm{\theta}$ by $\alpha.step(\bm{g}_{whole})$
}
\textbf{Return:} ${\bm{\theta}}_T$
\caption{Combination of ST and other optimizer.}
\label{algo:combine}
\end{algorithm}

\section{Experiment}


\subsection{Experiment settings}
\label{exp:setting}

\begin{figure*}[t]
	\setlength{\abovecaptionskip}{-5pt}
	\setlength{\belowcaptionskip}{-10pt}
	\begin{minipage}{0.24\linewidth}
		\centering
		\includegraphics[width=\linewidth]{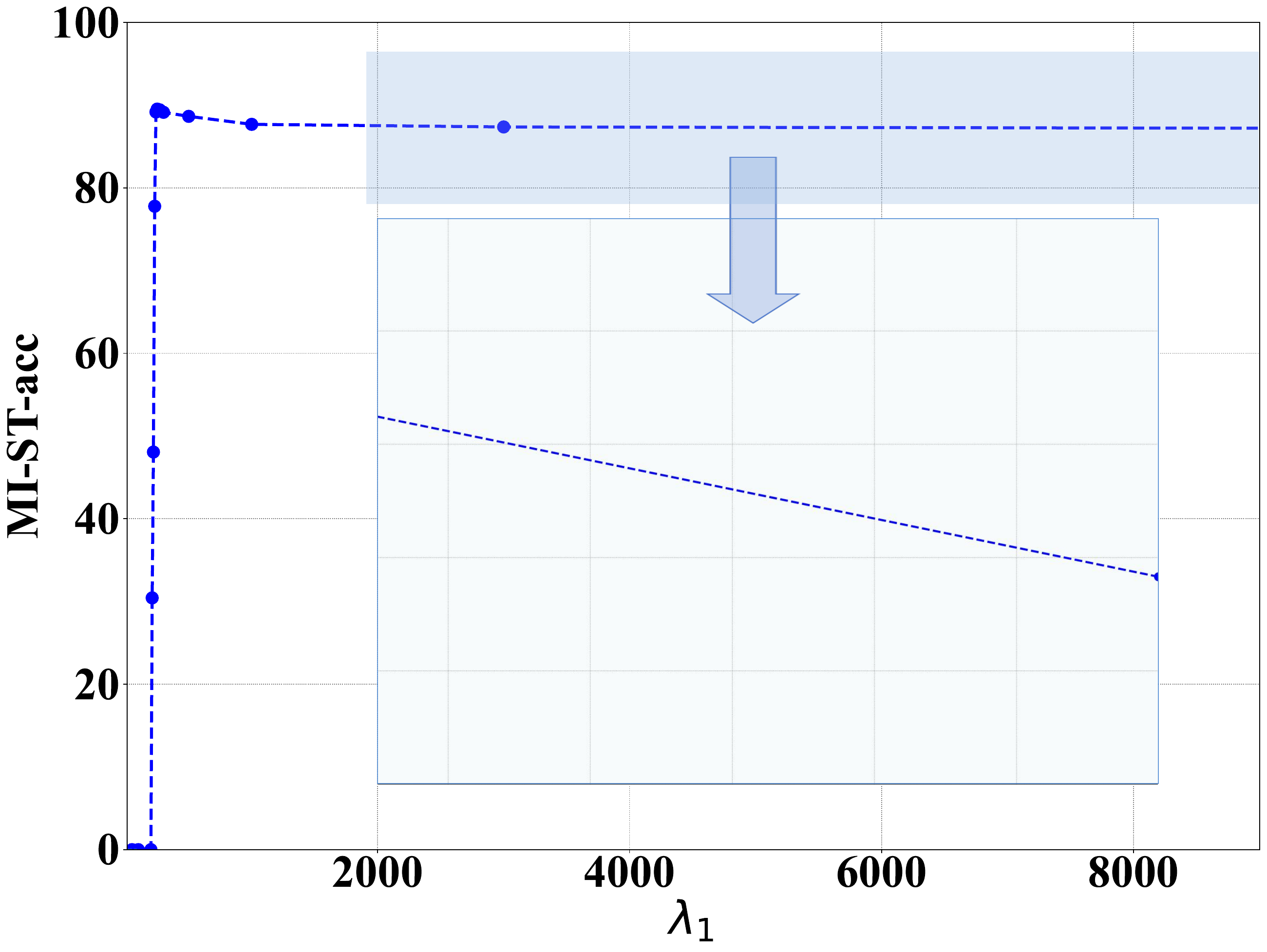}
		\caption{Loss weight $\lambda_1$}
		\label{figure2}
	\end{minipage}
	\hfill
	\begin{minipage}{0.24\linewidth}
		\centering
		\includegraphics[width=\linewidth]{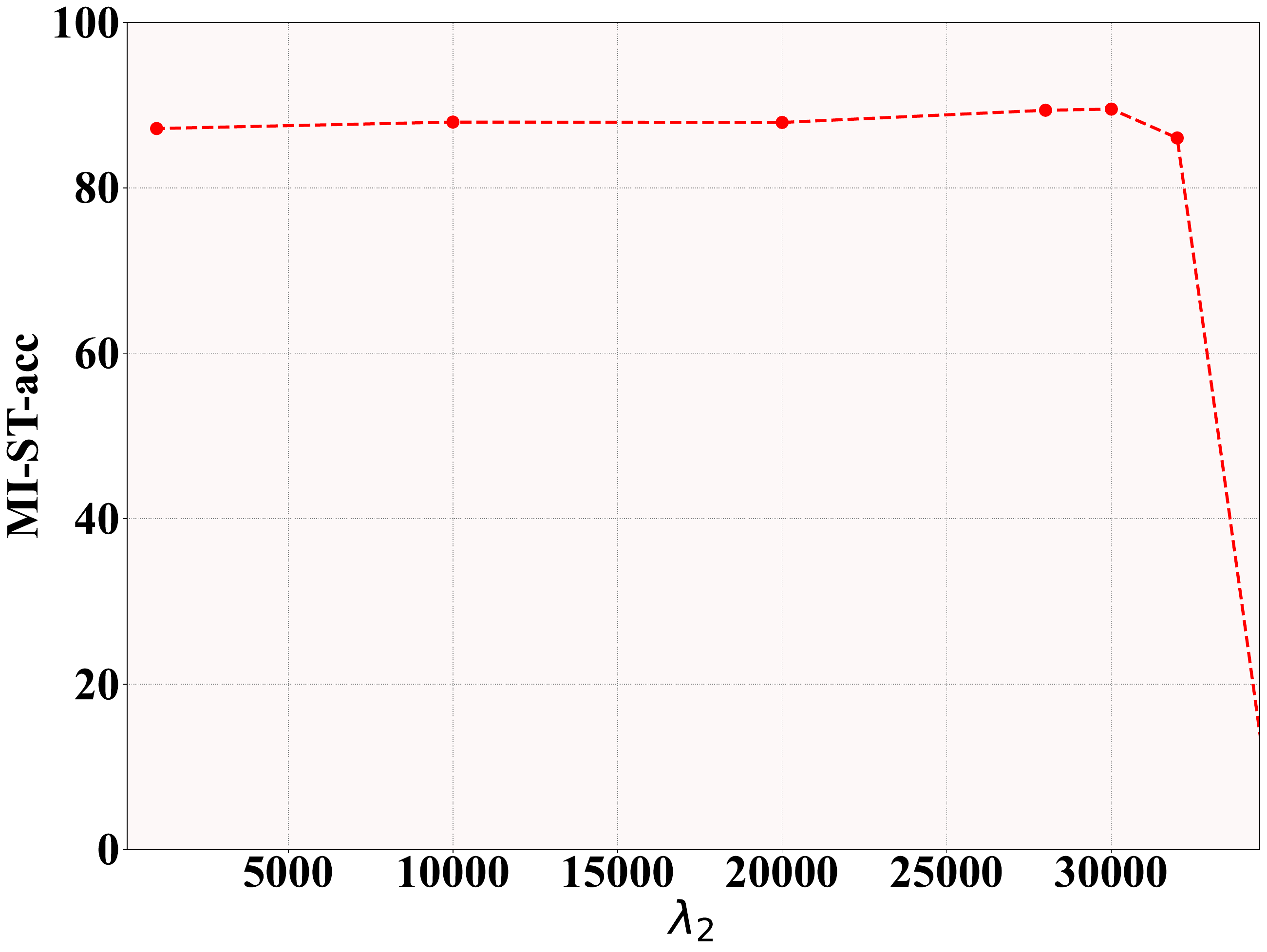}
		\caption{Cosine weight $\lambda_2$}
		\label{figure3}
	\end{minipage}
	\hfill
	\begin{minipage}{0.24\linewidth}
		\centering
		\includegraphics[width=\linewidth]{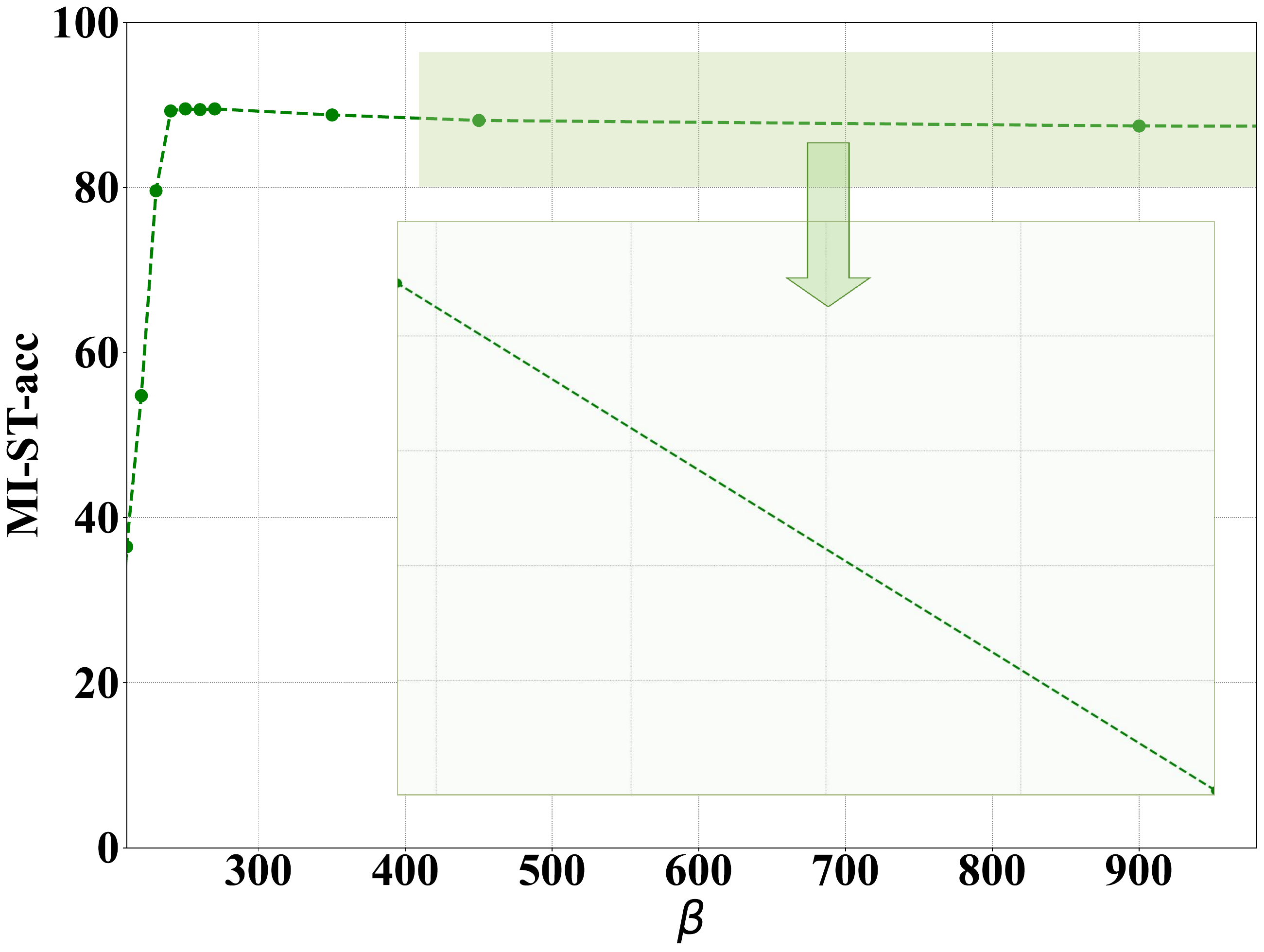}
		\caption{Inner step size $\beta$}
		\label{figure4}
	\end{minipage}
	\hfill
    \begin{minipage}{0.24\linewidth}  
		\centering
		\includegraphics[width=\linewidth]{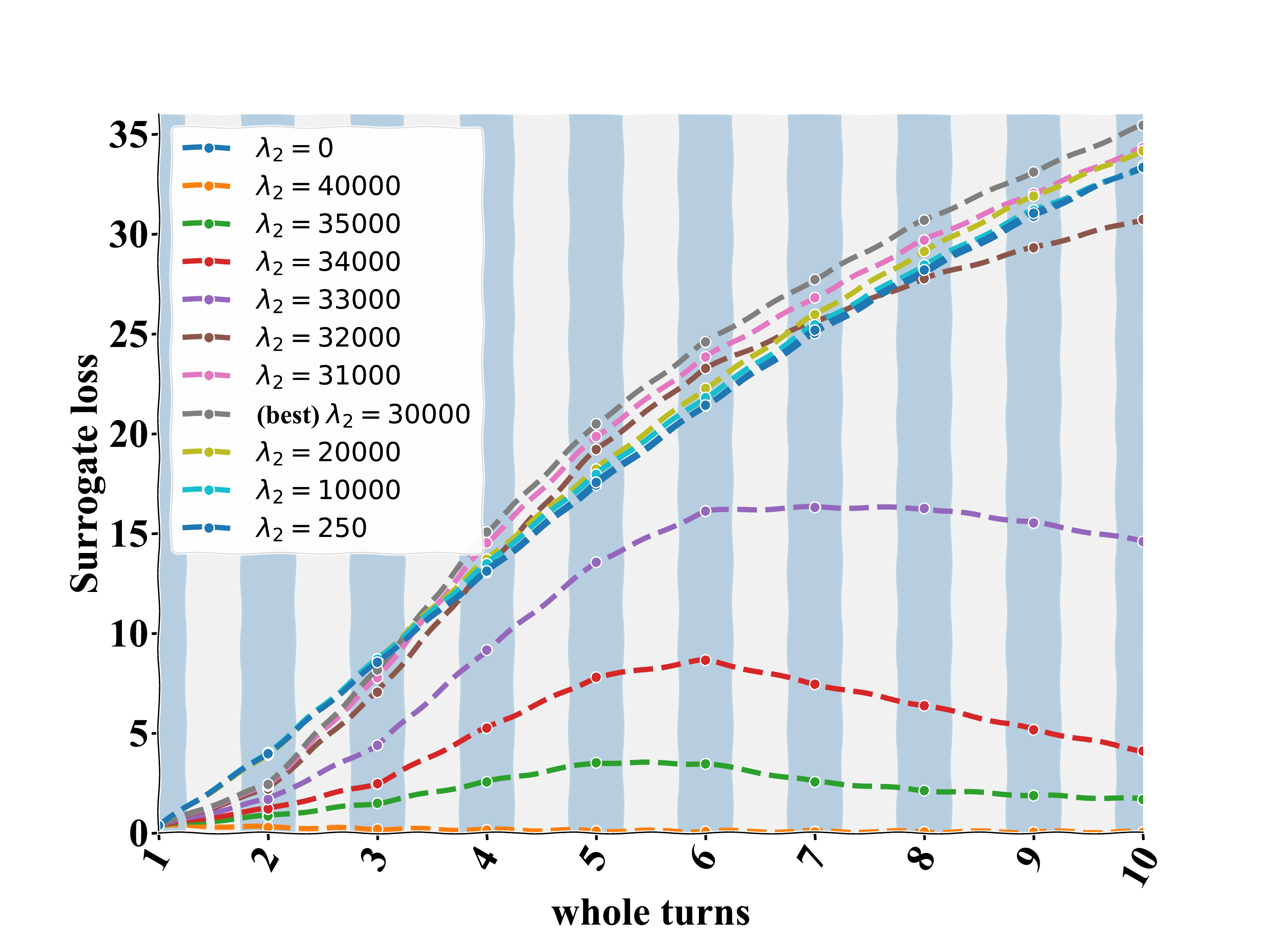}
		\caption{Gap between $\lambda_1$ and $\lambda_2$ }
		\label{figure5}
	\end{minipage}
\end{figure*}

We adopt the same settings as~\cite{chen2023rethinking}.

\textbf{Dataset:}  We use the NIPS2017 dataset, which is comprised of 1000 images selected from ImageNet. All the images are resized to $224\times 224$. 

\textbf{Surrogate Models}: We choose four normally trained models, ResNet18, ResNet32, ResNet50, and ResNet101 from TorchVision~\cite{marcel2010torchvision} and two adversarially trained models, ResNet50~\cite{salman2020adversarially} and XCiT-S12~\cite{debenedetti2023light} from \cite{croce2020robustbench}, which are effective in assessing the method's ability to utilize diverse surrogate models. 

\textbf{Black-box Models:} We evaluate the attack success rate on 24 black-box models, including 16 normally trained models --- AlexNet \cite{alexnet}, VGG-16 \cite{vgg}, GoogleNet \cite{googlenet}, Inception-V3 \cite{inception}, ResNet-152 \cite{resnet}, DenseNet-121 \cite{densenet}, SqueezeNet \cite{iandola2016squeezenet}, ShuffleNet-V2 \cite{ma2018shufflenet}, MobileNet-V3 \cite{mobilenet}, EfficientNet-B0 \cite{tan2019efficientnet}, MNasNet \cite{tan2019mnasnet}, ResNetX-400MF \cite{regnet}, ConvNeXt-T \cite{liu2022convnext}, ViT-B/16 \cite{vit}, Swin-S \cite{liu2021swin}, MaxViT-T \cite{tu2022maxvit}, and 8 adversarially trained models available on RobustBench \cite{croce2020robustbench} --- FGSMAT \cite{adversarialMLAtScale} with Inception-V3, Ensemble AT (EnsAT) \cite{tramer2017ensemble} with Inception-ResNet-V2, FastAT \cite{wong2020fast} with ResNet-50, PGDAT \cite{Engstrom2019Robustness,salman2020adversarially} with ResNet-50, ResNet-18, Wide-ResNet-50-2, a variant of PGDAT tuned by bag-of-tricks (PGDAT$^\dagger$) \cite{debenedetti2022light} with XCiT-M12 and XCiT-L12. Most defense models are state-of-the-art on RobustBench \cite{croce2020robustbench}.

\textbf{Compared Methods:} We compare our methods with FGSM~\cite{goodfellow2014explaining}, BIM~\cite{wang2021adversarial}, MI~\cite{dong2018boosting}, DI~\cite{xie2019improving}, TI~\cite{dong2019evading}, VMI~\cite{wang2021enhancing}, SVRG~\cite{xiong2022stochastic}, SAM~\cite{chen2023rethinking}, CSE~\cite{chen2023rethinking}, CWA~\cite{chen2023rethinking}.  

\textbf{Hyper-parameters:} We set the hyper-parameters as: perturbation threshold $\epsilon = 16/255$, total iteration rounds $T = 10$, momentum decay rate $\mu=1$, learning rate $\beta = 250$, $\alpha = 16/255/5$.

\subsection{Adversarial attacks on state-of-the-art models}
\label{exp:core_result}

\textbf{Attacks on Discriminative Classifiers.} 
The data presented in \cref{tab:t1} clearly demonstrates that our MI-ST algorithm consistently achieves an attack success rate of over 80\% across a variety of state-of-the-art target models, underscoring its efficacy in black-box settings. This high rate of success is observed even against well-established classifiers like VGG16, ResNet152, and DenseNet121, where the MI-ST algorithm reaches near-perfect success rates of 99.7\%, 100.0\%, and 100.0\%, respectively. Additionally, the MI-ST method shows remarkable effectiveness on models that share similarities with any of the surrogate models, leading to significantly higher attack success rates. This can be attributed to the MI-ST's unique approach of encouraging cosine similarity between gradients, which enhances optimization across all surrogate models in tandem. For models not closely resembling the surrogate models, MI-ST still exhibits substantial improvements in attack success rates, often exceeding increases of approximately 20\%. This is particularly evident in its performance on newer architectures like EfficientNetB0 and ViT-B/16, where MI-ST achieves a remarkable success rate of 99.3\% and 90.6\%, respectively. Such results verify the strong generalization ability of our approach, making MI-ST a formidable tool against a wide array of discriminative classifiers.

\textbf{Attacks on Secured Models.} 
The results in~\cref{tab:t2} clearly illustrate the superior performance of the MI-ST algorithm across various adversarially trained target models. Notably, MI-ST consistently outperforms the attack success rates of all existing algorithms, including advanced techniques like MI-CSE and MI-CWA. For instance, on models like InceptionV3 and EnsAT, MI-ST achieves remarkable success rates of 90.0\% and 81.4\%, respectively, outshining other methods by notable margins. This is particularly significant considering that these models represent some of the most robust defenses available on RobustBench. Furthermore, MI-ST demonstrates an improvement of at least 30\% over the MI algorithm in every case, highlighting its enhanced effectiveness. This substantial increase in attack success rate is a testament to the advanced optimization strategies employed by MI-ST. Even against the more recent and sophisticated PGDAT$^{\dag}$ models, MI-ST shows its prowess by achieving success rates of 78.4\% and 72.2\% for the XCiT-M and XCiT-L models, respectively. These results underline the formidable challenge MI-ST poses to even the most secured deep learning models, thereby emphasizing the necessity for developing more robust defensive strategies in the face of potential attacks.

\subsection{Ablation studies}
\label{sec:ablation}

\begin{figure*}
    \centering
    \includegraphics[width=17cm]{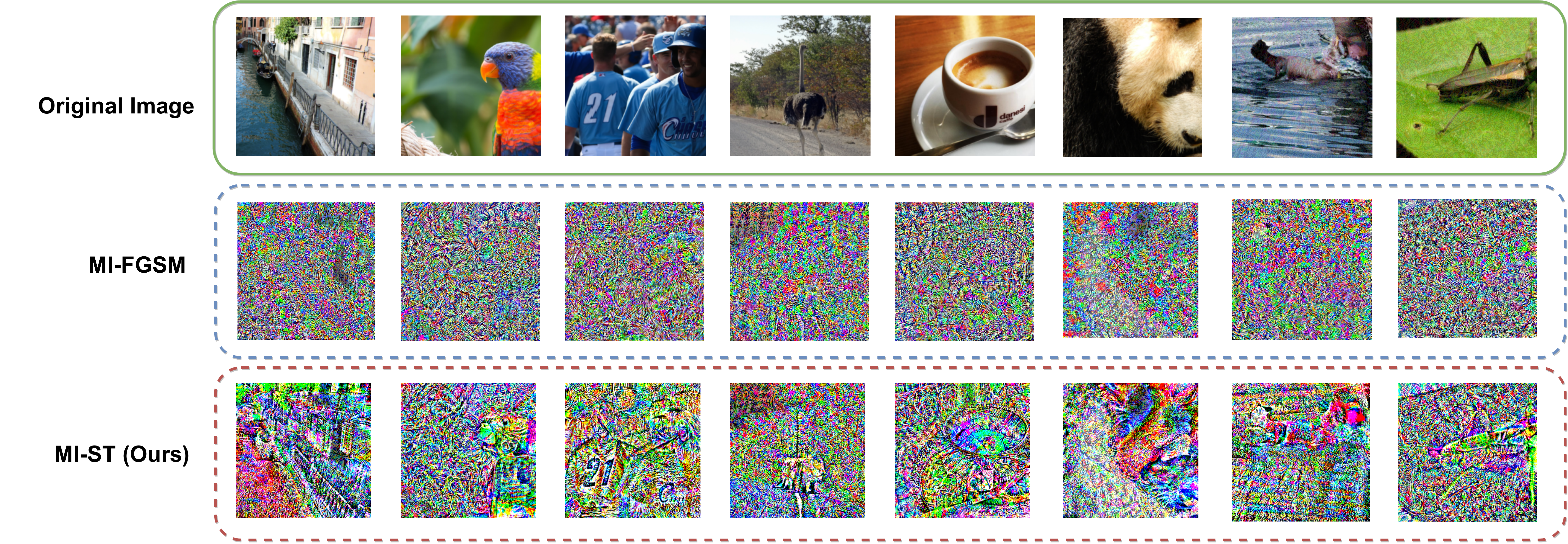}
    \caption{Visualization of adversarial perturbations by MI-FGSM and MI-ST.}
    \label{fig:semantic_gradient}
\end{figure*}

The gradient of the loss, controlled by $\lambda_1$, primarily governs the optimization of the loss function. Meanwhile, the gradient of the cosine term, governed by $\lambda_2$, plays a crucial role in regularization and generalization. Therefore, we ought to carefully tradeoff between these two terms. In the following, we explore the specific roles and effects of these parameters, explaining how they influence regularization and generalization.

\textbf{Loss weight $\lambda_1$:} As shown in \cref{figure2}, an increase in $\lambda_1$ weakens the impact of the regularization, resulting in a slight decline in attack success rate. We also observe that the attack success rate of models that are not similar to the surrogate models drops significantly. This shows the regularization ability of cosine similarity between gradients, especially for models that are not similar to surrogate models. On the other hand, too small a $\lambda_1$ will lead to insufficient optimization of loss functions over the surrogate models, thus leading to a decline in the attack success rate.

\textbf{Cosine weight $\lambda_2$:} To further understand the impact towards optimization, we visualize the average losses over surrogate models concerning $\lambda_2$ in \cref{figure5}. As shown, an excessively large $\lambda_2$ significantly impacts optimization, making it hard to maximize the loss function, resulting in degradation of the attack success rate for both surrogate and target models. Furthermore, as observed in~\cref{figure3}, when $\lambda_2$ gradually decreases, the regularization effect gradually diminishes, leading to a gradual reduction in transferability.

\textbf{Inner step size $\beta$:} Based on~\cref{figure4}, an increase in $\beta$ leads to a larger error term in the Taylor expansion, resulting in a slight decrease in the attack success rate. This decline is more notable for defense models, as attacking such models demands more precise gradients. However, excessively reducing $\beta$ can cause our model to converge into local optima and lead to insufficient optimization, significantly impacting the attack success rate.

\subsection{Discussions}

\textbf{Time complexity. }
Our method maintains the same number of function evaluations (NFEs) as the fundamental methods like MI-FGSM and I-FGSM. In practice, though, our method is slightly slower than MI-FGSM and I-FGSM because it cannot perform backward propagation through all surrogate models simultaneously. In other words, back-propagation for all surrogate models cannot be performed in parallel, as it is necessary to obtain the gradient of each model individually. Despite this, our optimization approach, which ensures the gradients of each surrogate model are similar, leads to a more rapid decrease in loss compared to MI-FGSM. Empirically, this results in our method consuming less time while achieving superior performance compared to MI-FGSM.

\textbf{Examination in a simple case.} 
In a 2-D simplistic scenario, we further examined the efficacy of our method. As depicted in \cref{fig:illustration}, our approach initially targets regions where the cosine similarity between the gradients of functions \( f \) and \( g \) is high. Consequently, it tends to converge towards a common optimum shared by both models. In contrast, traditional optimizers focus solely on the combined output of \( f+g \), overlooking individual model characteristics. This approach often leads to convergence at the minimizer of \( f+g \) alone, potentially compromising the effectiveness of attacks on \( f \) individually.

\textbf{Visualization of Adversaries.} 
We present visualizations of the adversarial perturbations for the first nine images in the dataset, using the same experimental setup as in \cref{exp:setting} and illustrated in \cref{fig:semantic_gradient}. The perturbations generated by our method demonstrate greater semantic meaning compared to those crafted by MI-FGSM. This indicates that our attack strategy is more adept at targeting and obscuring key features in the images. Such targeted perturbations are likely to exploit common vulnerabilities across different models, thereby improving the transferability of the attack to various unseen threat models.

\section{Conclusion}
In this paper, we propose a Similar  Target method to make full use of the information in each surrogate models by promoting cosine similarity between the gradients so as to attack all the models simultaneously. We conduct extensive experiments to validate the effectiveness of the proposed methods and explain why they work in practice. To further improve the transferability of the generated adversarial examples, we carried out some experiments focusing on finding the best trade-off between optimization and regularization. Among 24 discriminative classifiers and defense models, our method outperforms state-of-the-art attackers on 18 of them. The results demonstrate the vulnerability of the existing defenses and raise security issues for the development of more robust deep learning models.

\bibliographystyle{IEEEbib}
\bibliography{IEEEabrv,refs}
\end{document}